\documentclass[12pt]{article}

\usepackage{amsmath,amssymb,amsthm}
\usepackage{mathtools}
\usepackage{graphicx}
\usepackage{xcolor}
\usepackage{hyperref}
\hypersetup{colorlinks=true,linkcolor=blue,citecolor=blue,urlcolor=blue}
\usepackage[numbers,sort&compress]{natbib}
\usepackage{geometry}
\usepackage[expansion=false]{microtype}
\usepackage{tikz}
\usetikzlibrary{arrows.meta,positioning,shapes.geometric}
\usepackage{authblk}

\newtheorem{theorem}{Theorem}[section]
\newtheorem{proposition}[theorem]{Proposition}
\newtheorem{lemma}[theorem]{Lemma}

\theoremstyle{definition}
\newtheorem{definition}[theorem]{Definition}
\newtheorem{example}[theorem]{Example}
\theoremstyle{remark}
\newtheorem{remark}[theorem]{Remark}

\newcommand{\K}[2]{\mathcal{K}_{#1}(#2)}

\newcommand{\Prob}[2]{\mathsf{P}_{#1}(#2)}

\newcommand{\Bel}[3]{\mathsf{B}_{#1}(#2 \mid #3)}
\newcommand{\Guard}{\Gamma}
\newcommand{\permit}{\mathsf{permit}}
\newcommand{\deny}{\mathsf{deny}}
\newcommand{\LLMpol}{\pi_{\mathrm{LLM}}}

\newcommand{\Expertise}{\mathsf{E}}
\newcommand{\vocab}{\mathcal{L}}
\newcommand{\Lwn}{\mathcal{L}_{\mathsf{wn}}}
\newcommand{\vocabKsrc}{\vocab_{K,\mathrm{src}}}
\newcommand{\ELGM}{\mathcal{M}^{G}}
\newcommand{\Racc}[1]{R_{#1}}
\newcommand{\lot}[1]{L_{#1}}
\newcommand{\Agents}{\mathsf{Ag}}
\newcommand{\Acts}{\mathcal{A}}

\newcommand{\hist}{\mathbf{c}}

\begin{document}

\title{Epistemic-Probabilistic Model for Guarded Multi-Agent LLM Coordination}
\author[1]{Mehdi Nasiri}
\author[2]{Mohammad Saeed Arvenaghi}
\author[1]{Sadegh Vaezi}
\author[2,1]{Ebrahim Ardeshir-Larijani\footnote{Corresponding Author, larijani@iust.ac.ir}}

\affil[1]{Pasargad Institute for Advanced Innovative Solutions}
\affil[2]{Iran University of Science and Technology}
\date{}

\maketitle

\begin{abstract}
Large language model (LLM)-based multi-agent systems (MAS) are increasingly used in applied AI, yet their connection to established multi-agent theory remains underdeveloped. Many such systems lack explicit representations of social knowledge and protocol-governed coordination. La~Malfa et~al.~\cite{lamalfa2025} identify these issues among four shortcomings of current LLM-based MAS. We focus on the social-epistemic and coordination aspects of this gap.

We introduce Epistemic Probabilistic Language Agents (EPLA model), a neuro-symbolic architecture for multi-agent coordination under uncertainty. EPLA model combines an Epistemic Logic Core, a Conditional Belief Engine, retrieval-augmented memory, a Policy LLM, and a Symbolic Guard that controls execution against the authoritative symbolic state. Guard feedback, Adversarial Representation Engineering (ARE), and reinforcement learning with linear temporal logic (LTL) objectives provide interfaces for later adaptation. We formalize the epistemic layer in a gossip testbed through epistemic lottery gossip models, combining view-based call histories with agent-indexed probability weights. For a restricted knowledge fragment, we prove lottery transparency and invariance under positive admissible reweighting. For a precisely defined source-compatible gossip instance, an explicit modal-depth-one translation into Apt and Wojtczak's decidable language yields decidability of source-compatible Guards. We also prove a conditional ranking-progress bound for stochastic action selection.
\end{abstract}

\noindent\textbf{Keywords:} neuro-symbolic agents; epistemic gossip; epistemic probability; symbolic guards; LLM coordination.

\section{Introduction}
\label{sec:intro}

Large language model (LLM)-based multi-agent systems are often engineered as message-passing workflows.  An agent may recommend a tool call, delegate to another agent, or forward information retrieved from a document.  Fluent generation alone does not establish that the action is licensed by the agent's information state or by the interaction protocol.  In the position paper \emph{Large Language Models Miss the Multi-Agent Mark}, La~Malfa et~al.~\cite{lamalfa2025} identify a recurring mismatch between LLM-agent practice and multi-agent theory: current systems use multi-agent terminology, but often omit explicit social epistemics and protocol-governed coordination.

EPLA model addresses this gap through a neuro-symbolic division of responsibility: learned components propose and adapt, while explicit symbolic components represent state and control execution.  The policy module uses a large language model (Policy LLM) to interpret language and propose typed communication or tool actions.  The Epistemic Logic Core (ELC) maintains the authoritative symbolic state.  A Conditional Belief Engine (CBE) represents graded and conditional uncertainty, drawing on conditional-belief logic~\citep{vaneijck2017conditional}.  Retrieval-augmented generation (RAG) supplies retrieved context~\citep{lewis2020rag}; EPLA model additionally requires retained records to be durable and source-linked.  The Symbolic Guard determines whether a candidate action may execute.  Guard feedback is intended to supervise Adversarial Representation Engineering (ARE), an empirical model-editing method~\citep{zhang2024are}.  A reinforcement-learning (RL) head may optimize long-horizon behavior against objectives expressed in linear temporal logic (LTL)~\citep{hammond2021marl,camacho2019ltl}.  These are intended interfaces and empirical hypotheses, not implementation results.  For example, an LLM may propose a call between two agents, but the Guard permits it only when the exact symbolic state satisfies the call's specified precondition.

The formal setting is gossip-structured coordination: agents communicate pairwise, calls change what agents know, and later actions are permitted only when their epistemic preconditions are satisfied.  This setting is deliberately controlled, not a claim that arbitrary tool traces already have gossip semantics.  Epistemic gossip protocols have precise call-history semantics~\citep{ditmarsch2017gossip,ditmarsch2019dynamic}, while epistemic probability logic simplified (EPLS) gives a compact semantics in which knowledge is probability one~\citep{vaneijck2014epls}.  Combining these ideas distinguishes an LLM's graded uncertainty from a Guard's crisp knowledge condition.  The narrow formal question is whether adding positive uncertainty weights to source-style gossip histories changes any crisp knowledge-based Guard decision.

This paper makes two contributions.  The contribution is limited to the EPLA-specific architecture specification, formalization, and stated derivations; the paper does not claim priority for the individual methods or proof techniques in isolation.

\begin{enumerate}
\item It specifies an intended EPLA model architecture in which an exact symbolic state, a conditional-belief state, and source-linked retrieval records inform a Policy LLM; a Symbolic Guard controls execution and returns diagnostic feedback; and representation editing and temporal-objective policy learning may use that feedback to shape later proposals.
\item It defines \emph{epistemic lottery gossip models} (ELGMs), which enrich gossip call-history models with agent-indexed lottery weights.  Their accessibility relation is based on an agent's view, not merely on the subsequence of calls involving that agent.  The paper proves lottery transparency and positive-reweighting invariance, transfers Guard decidability only for an explicit modal-depth-one source fragment, and gives a conditional ranking bound for stochastic action selection.
\end{enumerate}

Because this version is an extended abstract, the results are accompanied by proof sketches; the full manuscript contains the complete proofs.

The formal and architectural components have different roles.  Theorems about ELGMs concern the stated gossip abstraction; they do not imply that an implementation realizes that abstraction.  The architecture locates the intended feedback paths, while the effects of its learned components remain empirical questions.

\section{EPLA model Architecture}
\label{sec:architecture}

Figure~\ref{fig:epla} summarizes the EPLA model architecture.
EPLA model treats coordination as a controlled loop rather than a free-form exchange of messages.  At time $t$, an agent receives an observation, consults persistent memory, forms a candidate action, and submits that action to the Guard.  The Guard checks the authoritative state, not a summary generated by the LLM.  An accepted action is passed to the ELC for the state transition; a rejected action returns a diagnostic and structured labels that may shape later proposals.

The responsibility boundaries are deliberately asymmetric.  The ELC alone maintains and commits the authoritative epistemic state; the CBE maintains graded uncertainty; and retrieval returns bounded, source-linked evidence. The Policy LLM turns these inputs into typed candidate actions, but cannot execute them.  The Guard checks the exact state and returns \textsc{permit} or \textsc{deny}; its feedback supplies diagnostic labels that ARE and the RL/LTL module may use to shape later proposals, without overriding the Guard.

\begin{figure}[ht]
\centering
\footnotesize
\resizebox{0.98\columnwidth}{!}{%
\begin{tikzpicture}[
>=Stealth,
node distance=1.0cm and 1.1cm,
font=\footnotesize,
block/.style={
draw,
rounded corners,
fill=blue!8,
minimum width=2.0cm,
minimum height=0.75cm,
align=center
},
storage/.style={
draw,
cylinder,
shape border rotate=90,
aspect=0.22,
minimum width=1.2cm,
minimum height=0.75cm,
fill=green!10,
align=center
},
decision/.style={
diamond,
draw,
fill=orange!15,
aspect=2,
inner sep=1pt,
align=center
},
line/.style={->,thick},
feedback/.style={->,thick,dashed}
]

\node[block] (obs) {Observation};

\node[block,right=1.4cm of obs] (llm)
{Policy LLM\\Typed Candidate};

\node[decision,right=1.5cm of llm] (guard)
{Guard};

\node[block,right=1.6cm of guard] (elc)
{ELC\\State Update};

\node[storage,below=1.25cm of obs] (mem)
{Persistent\\Memory};

\node[block,right=0.2cm of mem,minimum width=1.4cm] (ret)
{Retrieval};

\node[block,below=1.25cm of llm,minimum width=1.7cm] (cbe)
{CBE\\Summary};

\node[storage,below=1.25cm of elc] (state)
{Authoritative\\Guard State};

\node[block,below=1.25cm of guard] (diag)
{Guard\\Diagnostics};

\node[block,below left=1.1cm and 0.35cm of diag] (are)
{Optional ARE\\Supervision};

\node[block,below right=1.1cm and 0.35cm of diag] (rl)
{Optional RL/LTL\\Signals};

\node[block,right=0.45cm of rl] (outcome)
{Post-Transition\\Outcome/Monitor};

\node[block,below=2.7cm of diag] (later)
{Next Policy LLM\\Cycle Repeats Guard};

\draw[line] (obs)--(llm);
\draw[line] (llm)--node[above]{Candidate}(guard);
\draw[line] (guard)--node[above]{Permit}(elc);

\draw[line] (mem)--(ret);
\draw[line] (ret)--(llm);
\draw[line] (cbe)--(llm);
\draw[line] (obs.south) to[out=-70,in=80] node[pos=0.55,above]{Query} (ret.north);

\draw[line] (state)--node[right]{Exact state}(guard);
\draw[line] (elc)--node[right]{Symbolic commit}(state);

\draw[feedback] (guard)--(diag);
\draw[feedback] (diag)--(are);
\draw[feedback] (diag)--(rl);
\draw[feedback] (outcome)--(rl);
\draw[feedback] (are)--(later);
\draw[feedback] (rl)--(later);

\end{tikzpicture}
}
\caption{Simplified single-step architecture of EPLA model. The observation supplies both the Policy LLM context and the retrieval query. The Policy LLM proposes a typed candidate using the observation, retrieved memory, and a CBE summary. The Guard checks the candidate against the authoritative Guard state, which contains the exact symbolic and protocol/resource predicates required for execution; the CBE summary remains non-authoritative context. The ELC commits an accepted symbolic update. Guard diagnostics may supervise ARE and inform RL/LTL, while post-transition outcomes and the next LTL monitor state provide an additional RL/LTL input. Every later candidate re-enters the ordinary Guard path. Event-driven memory and CBE updates and external effects are specified in the execution model.}
\label{fig:epla}
\end{figure}


\subsection{Execution Model}

Let the operational state be
\[
  z_t=(G_t,\Sigma_t,\Pi_t,\mathcal{R}_t,q_t),
\]
where $G_t$ is the ELC state, $\Sigma_t$ is the CBE state, $\Pi_t$ is the protocol and resource state, $\mathcal{R}_t$ is the retrieval memory, and $q_t$ is the state of an LTL monitor when a temporal objective is active.  One decision cycle has nine stages:
\begin{enumerate}
  \item observe the current task and multi-agent state;
  \item retrieve relevant events, messages, rules, and evidence from $\mathcal{R}_t$;
  \item construct a bounded context that identifies the current task, symbolic summary, and retrieved evidence;
  \item ask the Policy LLM for a typed candidate action;
  \item check the action against epistemic, protocol, consistency, and budget conditions;
  \item return a denial and diagnostic, or prepare an accepted transition without yet performing an external side effect;
  \item atomically record the accepted action and let the ELC commit $G_{t+1}$; then perform any external side effect under a durable intent record and update $\Sigma_t$, $\Pi_t$, and $\mathcal{R}_t$ from the resulting event;
  \item when ARE is enabled, form representation-level supervision from the Guard labels; and
  \item when the RL head is enabled, update its policy and value estimates from the next monitor state and acceptance signal.
\end{enumerate}

A conforming implementation must evaluate the Guard and commit the accepted ELC successor against the same versioned state.  It must either serialize those operations or revalidate the action when the state version changes.  External side effects also require a durable intent record and an idempotent or compensating execution rule; these concurrency and recovery mechanisms are requirements, not features established by the present formal model.

The formal model has no human override.  At the architecture level, a denied request for a policy exception, or a high-impact action whose semantics are not formalized, is sent to a designated human reviewer.  Human approval may authorize a new typed action or a rule change, but it does not make a failed Guard predicate true; any approved action must enter through the Guard and ELC again.

The action schema separates natural-language generation from state transition.  A candidate action includes an action type, participating agents, structured arguments, an evidence reference set, and an optional natural-language realization.  The ELC determines the operational effect of an accepted action.  The architecture retains three communication action types: \textsc{Call}, \textsc{Message}, and \textsc{Announce}.  This paper gives operational semantics only to a pairwise push-pull \textsc{Call}, because that action has both a precise secret-union transition and source-verified view semantics~\citep{ditmarsch2017gossip,apt2017epistemic}.  Formalizing \textsc{Message} or \textsc{Announce} would additionally require a content language, recipient and observability rules, a truth and ambiguity policy, and exact preconditions and state-update rules.  Those choices are not fixed in this draft, so the two action types remain in the architecture rather than receiving invented formal semantics.

\subsection{Conditional Belief, Retrieval, and Guard Feedback}

The CBE maintains qualitative conditional-belief queries of the form $\Bel{a}{\varphi}{\psi}$, following van Eijck and Li~\citep{vaneijck2017conditional}, and may separately maintain numerical belief estimates as proposed for EPLA model.  The cited conditional-belief operator itself is not a numerical degree.  CBE state is updated after communication and may be summarized for the Policy LLM.  The exact symbolic state, rather than a lossy summary, remains the reference used by the Guard.  The ELGM developed below supplies a precise formal abstraction of graded uncertainty over gossip histories; it does not assume that every CBE implementation uses the same representation.  In that formalization, theorem-backed Guard predicates are restricted to a crisp ELC knowledge fragment defined in Section~\ref{sec:prelim}; numerical or conditional CBE queries require separate semantics and proof obligations.

RAG supplies persistent context that would otherwise be lost from a bounded prompt.  Each memory record should retain an event identifier, source or provenance link, timestamp or logical order, access scope, and confidence or validation metadata.  Retrieval is useful for reconstructing task context and protocol rules, but retrieved text is evidence for a decision, not a substitute for the state predicates checked by the Guard.

The Guard has two outputs.  Its execution output is a binary decision over the typed action.  Its learning output is a critique that names the failed condition, the relevant state facts, and a machine-readable label such as \textsc{call-permission}, \textsc{truthfulness}, or \textsc{protocol-compliance}.  This second output permits failures to become diagnostic data rather than silent discarded candidates.

\subsection{Representation Editing and Temporal Policy Learning}

ARE and LTL-constrained RL address different parts of the architecture.  EPLA model proposes using Guard-derived target and anti-target labels to edit representations associated with behavioral concepts such as permission, truthfulness, and protocol compliance.  Whether those labels produce valid and effective ARE supervision is an empirical hypothesis.  ARE does not replace the Guard, and it is not identified with the lottery-reweighting operator in Section~\ref{sec:updates-llm}.

The RL head addresses action selection over a trajectory.  For example, a temporal objective may require protocol violations never to occur and every agent eventually to learn every secret.  The monitor state and Guard outcome then provide signals for optimizing policy behavior over multiple decisions.  ARE and RL/LTL occupy different architectural roles, and neither is assumed to replace the Guard.

\subsection{Scalability and Conformance Requirements}

We identify four engineering issues that a future implementation should address.  First, Guard checks may need tiered validation, caching, incremental state updates, or compiled monitors so that exact checking does not become an uncontrolled bottleneck.  Second, the ELC/CBE state and retrieval memory may need event-sourced logs, replayable updates, and checks for disagreement between compact Policy summaries and exact Guard state.  Third, multi-agent scaling requires explicit topology and memory-partition choices rather than an assumption of all-to-all communication.  Fourth, ARE and RL updates need an evaluated training schedule that can detect and limit interference; possible strategies include timescale separation, partial freezing, and policy anchoring.  These are proposed implementation strategies and evaluation obligations, not methods or performance properties established by the formal results.

For the formal analysis below, we project the architecture to a call-only abstraction: retain the accepted pairwise \textsc{Call} history, secret facts, agent views, and lottery weights, and omit budgets, retrieved text, free-form messages and announcements, ARE parameters, and the LTL monitor.  At the secret-only root, a permitted \textsc{Call}$(a,b)$ appends $ab$ and the ELC replaces both callers' secret sets by their union; this is the transition modeled below.  \textsc{Message} and \textsc{Announce} candidates have no transition in the current formal model.  This projection explains the theorem boundary; it is not evidence that an implementation conforms to it.

For example, start with three agents, each of whom initially knows only its own secret.  The Policy LLM proposes a call between agents $a$ and $b$, with the precondition that $a$ does not yet know $b$'s secret.  The Guard reads the root ELC state, finds the call structurally available and the precondition true, and permits it.  The ELC records the call and updates both callers so that each knows both secrets.  If the same precondition is used for a second call between $a$ and $b$, the Guard denies it because $a$ now knows $b$'s secret; the history is unchanged and the failed-precondition label is returned as diagnostic feedback.

\section{Preliminaries}
\label{sec:prelim}

Let $\Agents$ be a finite set of agents.  Each agent initially owns one secret, and the call alphabet is $C:=\{ab:a,b\in\Agents,\ a\neq b\}$, with $ab$ denoting an unordered pair.  A finite call history is a word $\hist\in C^{<\omega}$.  The factual state used here records, for each agent, which secrets are locally available; phone-number exchange is outside the present formal instance.  In the push-pull setting used in the standard gossip literature, a call $ab$ makes the callers exchange the secrets they currently know.

The relevant epistemic object is the agent's \emph{view}.  The view is richer than the bare subsequence of calls in which the agent participates: it also records the local information the agent obtains through those calls.  This is the notion used in source gossip semantics~\citep{ditmarsch2017gossip,apt2017epistemic}.

\begin{definition}[Views and source-style accessibility]
\label{def:views}
For each agent $a$, let $\mathsf{view}_a(\hist)$ be the sequence of local observations available to $a$ after history $\hist$: calls not involving $a$ are invisible, while a call involving $a$ appends the call together with $a$'s resulting local gossip state.  Define
\[
  \hist \Racc{a} \hist' \quad\text{iff}\quad
  \mathsf{view}_a(\hist)=\mathsf{view}_a(\hist').
\]
\end{definition}

This definition intentionally follows the view-based semantics of epistemic gossip.  Equality of the call subsequence involving $a$ is not enough: if $a$ calls $c$ after $c$ has already learned another secret, then $a$ observes a different local state than in a history where $c$ had not learned it.

We use atoms $S_{ab}$ to mean that agent $a$ is familiar with agent $b$'s secret at the current history.  Let $V(S_{ab})$ be the set of histories where this is true.  These atoms are local for their first index: by construction of the view, if $\hist\Racc{a}\hist'$, then $S_{ab}$ has the same truth value at $\hist$ and $\hist'$.  The factual expertise condition is
\[
  \Expertise_0 \;:=\; \bigwedge_{a,b\in\Agents} S_{ab}.
\]
EPLS represents epistemic probabilities by lotteries over possible worlds~\citep{vaneijck2014epls}.  We use the same idea, but with histories as worlds.

\begin{definition}[Admissible epistemic lottery gossip model]
\label{def:admissible-elgm}
An admissible epistemic lottery gossip model is a tuple
\[
  \ELGM=(H,V,\{\Racc{a}\}_{a\in\Agents},\{\lot{a}\}_{a\in\Agents})
\]
where:
\begin{enumerate}
\item $H\subseteq C^{<\omega}$ is a nonempty set of finite histories;
\item $V$ is the factual valuation induced by the gossip state after each history;
\item $\Racc{a}$ is the view-equivalence relation from Definition~\ref{def:views}, restricted to $H$;
\item $\lot{a}:H\to\mathbb{R}_{>0}$ is a strictly positive lottery such that, for every $\hist\in H$,
\[
  0 < \sum_{\hist'\in [\hist]_a} \lot{a}(\hist') < \infty,
  \qquad [\hist]_a:=\{\hist'\in H: \hist\Racc{a}\hist'\}.
\]
\end{enumerate}
\end{definition}

\begin{definition}[Knowledge fragment]
\label{def:vocab-k}
The guard fragment $\vocab_K$ is generated by
\[
  \varphi ::= \top \mid p \mid \neg\varphi \mid (\varphi\wedge\psi) \mid \K{a}{\varphi},
\]
where $p$ ranges over factual gossip atoms.  It contains no numerical probability atoms such as $\Prob{a}{\varphi}\ge q$.  The source-compatible subfragment below uses only atoms $S_{ab}$.
\end{definition}

For atoms $p$, write $\ELGM,\hist\models p$ exactly when $\hist\in V(p)$.  The clauses for $\top$, negation, and conjunction are the usual Boolean clauses.  These clauses, together with the knowledge clause below, define satisfaction for the knowledge fragment.

The finite-sum condition is not cosmetic.  If $H$ contains an infinite $a$-accessibility class, a constant-weight prior on that class is not admissible; a finite protocol domain or a summable length-decaying prior is therefore needed.  EPLA model's use of positive real weights is a modeling generalization.  EPLS itself permits finite or countable worlds and defines lotteries with positive rational values bounded on every $R_a$-equivalence class; countability does not require real weights.  Finite rational instances are EPLS-compatible special cases.

For an admissible ELGM, define the probability assigned by $a$ at $\hist$ to a formula $\varphi$ by
\[
  \Prob{a}{\varphi}(\hist)
  =
  \frac{\sum_{\hist'\in [\hist]_a,\; \ELGM,\hist'\models\varphi}\lot{a}(\hist')}
       {\sum_{\hist'\in [\hist]_a}\lot{a}(\hist')}.
\]
Knowledge is probability one:
\[
  \ELGM,\hist\models \K{a}{\varphi}
  \quad\text{iff}\quad
  \Prob{a}{\varphi}(\hist)=1.
\]

Because the atoms $S_{ab}$ are local for their first index, this knowledge semantics gives $S_{ab}\leftrightarrow\K{a}{S_{ab}}$.  Thus $\Expertise_0$ is equivalent to the usual knowledge formulation $\bigwedge_{a,b\in\Agents}\K{a}{S_{ab}}$ in this model.

For the source-compatible decidability theorem only, we use the narrower gossip language $\Lwn$ defined by Krzysztof R. Apt and Dominik Wojtczak in \emph{On Decidability of a Logic of Gossips} and \emph{Common Knowledge in a Logic of Gossips}~\citep{apt2016decidability,apt2017epistemic}.  In logic, a language is a grammar: it specifies which formulas are permitted.  $\Lwn$ permits factual gossip formulas and an epistemic or common-knowledge modality applied directly to a factual formula, but it does not permit probability formulas or a modality inside another modality.  Thus the source-compatible formula $\K{a}{S_{bc}}$ is permitted, whereas $\K{a}{\K{b}{S_{cd}}}$ is not.  The general EPLA model Guard fragment $\vocab_K$ is different: it permits nested individual-knowledge formulas but excludes numerical probability atoms.  Definition~\ref{def:source-guard-fragment} defines $\vocabKsrc$, the smaller part of $\vocab_K$ that is translated into $\Lwn$.

\begin{definition}[Apt--Wojtczak source instance]
\label{def:aw-source-instance}
Fix a finite, linearly ordered agent set $A$ with $|A|\geq 3$ and a bijection $b\mapsto p_b$ from agents to distinct secrets.  Let $C_{\mathrm{src}}$ contain exactly the canonically written pairs $ab$ with $a<b$, and let $H_{\mathrm{src}}:=C_{\mathrm{src}}^{<\omega}$ be the full set of finite call sequences.  The initial factual state is the secret-only root in which each agent $a$ knows exactly $p_a$, and every call applies the source push-pull union update.

An \emph{Apt--Wojtczak source instance} is an admissible ELGM whose history domain is $H_{\mathrm{src}}$, whose valuation satisfies
\[
  \hist\in V(S_{ab})\quad\text{iff}\quad p_b\in \hist(\mathsf{root})_a,
\]
where $\hist(\mathsf{root})_a$ denotes agent $a$'s secret set after executing $\hist$ from the initial root state, and whose accessibility relation is exactly the source relation $\sim^{\mathrm{AW}}_a$ on call sequences.  Equivalently, it is equality of Apt--Wojtczak's recursively constructed source views, which their equivalence theorem identifies with $\sim^{\mathrm{AW}}_a$~\citep{apt2017epistemic}.  This specialized instance fixes Definition~\ref{def:views} to the source's secret-only call and view semantics; it does not include phone-number exchange, protocol-restricted history domains, or other EPLA model action types.
\end{definition}

\begin{definition}[Source-compatible Guard fragment and translation]
\label{def:source-guard-fragment}
Let the factual source formulas $\theta$ and the source-compatible Guard formulas $\varphi$ be generated by
\[
\begin{aligned}
\theta &::= \top \mid S_{ab} \mid \neg\theta \mid (\theta\wedge\theta),\\
\varphi &::= \theta \mid \neg\varphi \mid (\varphi\wedge\varphi) \mid \K{a}{\theta}.
\end{aligned}
\]
Write $\vocabKsrc$ for this modal-depth-one subfragment of $\vocab_K$.  For a fixed $a_0\in A$, define a translation $\tau$ into the Apt--Wojtczak language recursively by
\[
\begin{aligned}
\tau(S_{ab})&=F_a p_b, &
\tau(\top)&=F_{a_0}p_{a_0}\lor\neg F_{a_0}p_{a_0},\\
\tau(\neg\varphi)&=\neg\tau(\varphi), &
\tau(\varphi\wedge\psi)&=\tau(\varphi)\wedge\tau(\psi),\\
\tau(\K{a}{\theta})&=K_a\tau(\theta).
\end{aligned}
\]
Here $F_a p_b$ is the source familiarity atom stating that agent $a$ is familiar with agent $b$'s secret, and $K_a$ is the singleton-agent epistemic modality (equivalently, $C_{\{a\}}$ in the source notation).  The symbol $\lor$ is the usual Boolean abbreviation.  Factual formulas translate into the source propositional language, and each knowledge formula translates to one unnested individual-knowledge modality over a propositional formula.  Hence the image of $\vocabKsrc$ lies in $\Lwn$.  The translation introduces neither probability terms nor nonsingleton common-knowledge operators.
\end{definition}

\section{Basic Epistemic Properties}
\label{sec:elgm}

The ELGM adds graded uncertainty to a standard gossip model without changing the underlying view-based accessibility relation.  The lottery weights say which histories an agent takes to be more likely among the histories compatible with its view.  The knowledge operator remains crisp because knowledge is the probability-one case.

\begin{proposition}[S5 behavior of knowledge]
\label{prop:s5-elgm}
In every admissible ELGM, the operator $\K{a}{\cdot}$ satisfies the S5 axioms over the language in Definition~\ref{def:vocab-k}.
\end{proposition}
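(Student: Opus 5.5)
The plan is to reduce the probability-one semantics to ordinary Kripke semantics over the equivalence classes $[\hist]_a$, and then invoke the standard fact that S5 is sound on equivalence frames. The key lemma is:
\[
  \ELGM,\hist\models\K{a}{\varphi}
  \quad\text{iff}\quad
  \text{for all }\hist'\in[\hist]_a,\ \ELGM,\hist'\models\varphi .
\]

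To prove the lemma, let $N$ be the denominator $\sum_{\hist'\in[\hist]_a}\lot{a}(\hist')$ and $M$ the numerator restricted to $\varphi$-histories. Admissibility (Definition~\ref{def:admissible-elgm}) gives $0<N<\infty$, so $\Prob{a}{\varphi}(\hist)=M/N$ is well defined. Also $N-M$ is the sum of $\lot{a}$ over the $\neg\varphi$-histories in the class, which converges because it is a sub-sum of a convergent series of positive terms. Hence $\Prob{a}{\varphi}(\hist)=1$ iff $N-M=0$. Since every weight is strictly positive, this holds iff the class contains no $\neg\varphi$-history. This step is the only place the lottery structure matters, and I expect it to be the main (though mild) obstacle: positivity and summability must be used carefully when the class is countably infinite, because absolute convergence of positive series is needed to split the sum. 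The lemma holds for arbitrary $\varphi\in\vocab_K$, since satisfaction is defined by simultaneous recursion and the argument treats $\varphi$ as a fixed set of histories.

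Next, I would note that $\Racc{a}$ is an equivalence relation on $H$, because it is defined as equality of $\mathsf{view}_a$ (Definition~\ref{def:views}) restricted to $H$. Reflexivity, symmetry and transitivity are inherited from equality.

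With the lemma, the S5 axioms follow by the textbook verification on equivalence frames:
\begin{itemize}
\item $K$ (distribution) follows from universal quantification over $[\hist]_a$.
\item $T$ follows from reflexivity, since $\hist\in[\hist]_a$.
\item $4$ holds because if $\hist'\in[\hist]_a$ then $[\hist']_a=[\hist]_a$, so $\K{a}{\varphi}$ is constant on each class.
\item $5$ holds by the same class-invariance: $\neg\K{a}{\varphi}$ at $\hist$ is witnessed by some $\neg\varphi$-history in the common class.
\end{itemize}
Necessitation holds because a validity is true at every history of every class. Modus ponens preserves validity trivially.
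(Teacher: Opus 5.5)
Your proposal is correct and follows essentially the same route as the paper's sketch. In both, $\Racc{a}$ is an equivalence relation because it is equality of views, and strict positivity with a finite nonzero class sum reduces probability one to truth at every history in $[\hist]_a$; the standard S5 argument on equivalence frames then applies. Your handling of possibly infinite (countable) classes and convergent sub-sums just makes the paper's positivity step more explicit.
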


\begin{proof}[Proof sketch]
Since $\Racc{a}$ is an equivalence relation, every world in $[\hist]_a$ has the same accessible class.  Strictly positive weights make probability one equivalent to truth at every accessible history.  The usual S5 argument for equivalence relations therefore applies. 
\end{proof}

\begin{example}[A view distinction]
\label{ex:view-distinction}
Let the agents be $a,b,c$.  Compare histories $ac$ and $bc;ac$.  The subsequence of calls involving $a$ is $ac$ in both histories.  But after $ac$, agent $a$ learns only what $c$ had at that point.  In $bc;ac$, agent $c$ has first learned from $b$, so the later call gives $a$ more information.  Hence the views of $a$ differ.  This is why Definition~\ref{def:views} uses views rather than call subsequences.
\end{example}

\section{Guards and Lottery Transparency}
\label{sec:guards}

An action guard is an epistemic precondition.  In an LLM-agent interpretation, the LLM may propose an action, but the symbolic layer decides whether the action is permitted.  The formal core models pairwise gossip calls only.  For $\hist\in H$, let
\[
  \Acts_H(\hist):=\{ab\in C:\hist\cdot ab\in H\}.
\]
Thus a call is structurally executable only when its successor history is in the model domain.  Messages, retrieval operations, and other actions in the wider EPLA model architecture require a separate operational semantics.

\begin{definition}[Symbolic guard]
\label{def:guard}
Let each call $\alpha\in C$ have a precondition $\mathsf{pre}(\alpha)\in\vocab_K$.  The Guard in $\ELGM$ is
\[
  \Guard_{\ELGM}(\alpha,\hist)=
  \begin{cases}
    \permit, & \text{if } \alpha\in\Acts_H(\hist)\text{ and }\ELGM,\hist\models \mathsf{pre}(\alpha),\\
    \deny, & \text{otherwise.}
  \end{cases}
\]
\end{definition}

Guard soundness is immediate from the definition: if $\Guard_{\ELGM}(\alpha,\hist)=\permit$, then the stated precondition is true at $\hist$ and the call has a successor history in $H$.  This is a semantic property of the formal predicate.  A concrete Guard must conform to the same state representation, parser, and rule semantics for the property to govern its executions.  The nontrivial point is that, for $\vocab_K$, the truth value does not depend on the exact positive lottery weights.

Let $M_0=(H,V,\{\Racc{a}\}_{a\in\Agents})$ be the lottery-free Kripke model underlying $\ELGM$, using the standard Kripke clause for $K_a$.

\begin{lemma}[Lottery transparency]
\label{lem:lottery-transparency}
For every admissible ELGM, every $\hist\in H$, and every $\varphi\in\vocab_K$,
\[
  \ELGM,\hist\models\varphi
  \quad\text{iff}\quad
  M_0,\hist\models\varphi.
\]
\end{lemma}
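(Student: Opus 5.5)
We argue by structural induction on $\varphi\in\vocab_K$, simultaneously for all histories $\hist\in H$. The two models share the domain $H$, the valuation $V$, and the relations $\Racc{a}$. They differ only in the knowledge clause: $\ELGM$ uses the probability-one clause, while $M_0$ uses the Kripke clause ``$\varphi$ holds at every $\hist'\in[\hist]_a$.'' So the base cases and the Boolean cases are immediate. For $\top$ and for atoms $p$, both models evaluate via $V$ in the same way. For $\neg\varphi$ and $\varphi\wedge\psi$, the claim follows directly from the induction hypothesis at the same history $\hist$.

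The only substantive case is $\K{a}{\varphi}$. By the induction hypothesis applied to $\varphi$ at every $\hist'\in[\hist]_a$, the set
\[
  T:=\{\hist'\in[\hist]_a : \ELGM,\hist'\models\varphi\}
\]
coincides with $\{\hist'\in[\hist]_a : M_0,\hist'\models\varphi\}$. Write $Z:=\sum_{\hist'\in[\hist]_a}\lot{a}(\hist')$, which lies in $(0,\infty)$ by admissibility (Definition~\ref{def:admissible-elgm}). Then $\Prob{a}{\varphi}(\hist)=1$ iff $\sum_{\hist'\in T}\lot{a}(\hist')=Z$. This holds iff
\[
  \sum_{\hist'\in[\hist]_a\setminus T}\lot{a}(\hist')=0 .
\]
Subtracting these sums is legitimate because all terms are nonnegative and $Z$ is finite. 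Since every $\lot{a}(\hist')$ is strictly positive, a sum of such terms vanishes iff the index set $[\hist]_a\setminus T$ is empty, that is, iff $T=[\hist]_a$. That is exactly the Kripke condition $M_0,\hist\models\K{a}{\varphi}$.

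The main point needing care is this last step when $[\hist]_a$ is countably infinite. Here I would rely on the finiteness of $Z$ to ensure the series converges absolutely. Then splitting it over $T$ and its complement is valid, and a convergent series of strictly positive terms over a nonempty index set has a strictly positive sum. Note that this argument is the same fact already used in the sketch of Proposition~\ref{prop:s5-elgm}, now carried through the induction. Since the clause depends on the weights only through their positivity and finite class sums, the conclusion holds for every admissible ELGM. Nested modalities pose no extra difficulty, because the induction hypothesis is quantified over all histories.
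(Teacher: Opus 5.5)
Your proof is correct and takes essentially the same route as the paper. You use structural induction, inherit the atomic and Boolean cases from the shared $H$, $V$ and $R_a$, and in the knowledge case use strict positivity and the finite positive class sum to show that probability one is equivalent to no accessible history falsifying the subformula. Your extra care with countably infinite accessibility classes (convergence, and splitting the sum over $T$ and its complement) makes explicit a step the paper's sketch leaves implicit.
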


\begin{proof}[Proof sketch]
The proof is by structural induction.  Atoms and Boolean cases are immediate because $\ELGM$ and $M_0$ share $H,V,$ and $\Racc{a}$.  For $\K{a}{\psi}$, admissibility gives strictly positive weights and a finite positive denominator on $[\hist]_a$.  Therefore $\Prob{a}{\psi}(\hist)=1$ iff no accessible history falsifies $\psi$.  By the induction hypothesis, this is equivalent to the Kripke truth condition for $K_a\psi$ in $M_0$. 
\end{proof}

\begin{lemma}[Source correspondence]
\label{lem:source-correspondence}
For every Apt--Wojtczak source instance, every $\hist\in H_{\mathrm{src}}$, and every $\varphi\in\vocabKsrc$,
\[
  \ELGM,\hist\models\varphi
  \quad\text{iff}\quad
  (M_{\mathrm{AW}},\hist)\models\tau(\varphi),
\]
where $M_{\mathrm{AW}}$ is the source gossip model on the same agent set and root state.
\end{lemma}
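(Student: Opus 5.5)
The plan is a two-stage reduction: first remove the lotteries, then carry out a structural induction against the Apt--Wojtczak semantics. Since $\vocabKsrc\subseteq\vocab_K$, Lemma~\ref{lem:lottery-transparency} gives $\ELGM,\hist\models\varphi$ iff $M_0,\hist\models\varphi$. Here $M_0=(H_{\mathrm{src}},V,\{\sim^{\mathrm{AW}}_a\}_{a\in A})$ is the lottery-free Kripke model of the source instance. It therefore suffices to prove
\[
  M_0,\hist\models\varphi \quad\text{iff}\quad (M_{\mathrm{AW}},\hist)\models\tau(\varphi)
\]
for all $\hist\in H_{\mathrm{src}}$ and $\varphi\in\vocabKsrc$. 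We prove this by induction following the two-layer grammar of Definition~\ref{def:source-guard-fragment}: factual formulas $\theta$ first, then Guard formulas $\varphi$.

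For the factual layer, the atom case is the only substantive step. By Definition~\ref{def:aw-source-instance}, $\hist\in V(S_{ab})$ iff $p_b\in\hist(\mathsf{root})_a$. In the source semantics, $F_a p_b$ holds at $\hist$ exactly when $p_b$ belongs to $a$'s secret set after executing $\hist$ from the secret-only root under the push-pull union update. Both sides therefore refer to the same gossip state, and the case holds once the source truth clause for $F_ap_b$ is cited. The case $\top$ is immediate, since $\tau(\top)=F_{a_0}p_{a_0}\lor\neg F_{a_0}p_{a_0}$ is a classical tautology and so holds at every call sequence. Negation and conjunction follow from the induction hypothesis, because $\tau$ commutes with both connectives and both semantics use the classical clauses. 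We also record a fact needed below: the truth of $\tau(\theta)$ in $M_{\mathrm{AW}}$ depends only on the call sequence, as a propositional evaluation on the gossip state it reaches.

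The Guard layer is handled the same way. Factual formulas are covered by the previous paragraph, and the Boolean cases are again routine. For $\K{a}{\theta}$, the Kripke clause in $M_0$ says that $\theta$ holds at every $\hist'$ with $\hist\sim^{\mathrm{AW}}_a\hist'$. The source clause for $K_a\tau(\theta)$, equivalently $C_{\{a\}}\tau(\theta)$, quantifies over exactly the call sequences that are $a$-indistinguishable from $\hist$, and it evaluates $\tau(\theta)$ at each of them. Applying the factual-layer equivalence at each such $\hist'$ closes the case.

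The main obstacle is this knowledge case. It requires that the accessibility relation of the instance coincide with the one Apt and Wojtczak use in their truth clause, and over the same domain. By Definition~\ref{def:aw-source-instance} the domain is the full set $C_{\mathrm{src}}^{<\omega}$ and the relation is exactly $\sim^{\mathrm{AW}}_a$. So the step reduces to checking that $M_{\mathrm{AW}}$ quantifies over all finite call sequences, with no protocol restriction, and that its singleton common-knowledge clause uses the reflexive–transitive closure of the single relation $\sim^{\mathrm{AW}}_a$. That closure is $\sim^{\mathrm{AW}}_a$ itself, because it is an equivalence relation; the view-based characterization from their equivalence theorem shows this. Two further points keep the induction well-founded. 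The modal-depth-one restriction means the knowledge case only ever invokes the factual layer, never a nested modality. Admissibility of the lotteries on the infinite domain $H_{\mathrm{src}}$ is already absorbed by Lemma~\ref{lem:lottery-transparency}.
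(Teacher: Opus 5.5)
Your proposal is correct and takes the same route as the paper's proof sketch. You use Lemma~\ref{lem:lottery-transparency} to pass to the lottery-free Kripke model, let Definition~\ref{def:aw-source-instance} identify its valuation and accessibility with those of $M_{\mathrm{AW}}$, and then run a structural induction whose atomic case is the defining clause for $V(S_{ab})$ and whose knowledge case uses the shared relation $\sim^{\mathrm{AW}}_a$. Your added details fill in points the sketch leaves implicit: the tautology $\tau(\top)$, the two-layer induction, and the observation that the closure defining $C_{\{a\}}$ is $\sim^{\mathrm{AW}}_a$ itself.
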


\begin{proof}[Proof sketch]
Lemma~\ref{lem:lottery-transparency} first replaces the ELGM by its lottery-free Kripke structure.  Definition~\ref{def:aw-source-instance} identifies that structure's valuation and accessibility relations with those of $M_{\mathrm{AW}}$.  A structural induction on $\varphi$ then gives the result: the atomic case is the defining clause for $V(S_{ab})$, Boolean cases are immediate, and the knowledge case uses the common relation $\sim^{\mathrm{AW}}_a$ and the translated $K_a$ clause. 
\end{proof}

\begin{theorem}[Guard decidability for the source-compatible fragment]
\label{thm:guard-decidability}
Let $\ELGM$ be an Apt--Wojtczak source instance, let $\hist\in H_{\mathrm{src}}$, let $\alpha\in C_{\mathrm{src}}$, and let $\mathsf{pre}(\alpha)\in\vocabKsrc$.  Given finite encodings of $\alpha$, $\hist$, and $\mathsf{pre}(\alpha)$, checking $\Guard_{\ELGM}(\alpha,\hist)$ is decidable.
\end{theorem}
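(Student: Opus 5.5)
The plan is to split the Guard check into its two conjuncts from Definition~\ref{def:guard} and decide each one. The structural conjunct $\alpha\in\Acts_{H_{\mathrm{src}}}(\hist)$ is trivial. Since $H_{\mathrm{src}}=C_{\mathrm{src}}^{<\omega}$ is the full set of finite call sequences, $\hist\cdot\alpha\in H_{\mathrm{src}}$ holds for every $\hist\in H_{\mathrm{src}}$ and every $\alpha\in C_{\mathrm{src}}$. Given the finite encodings, it therefore suffices to check syntactically that $\alpha$ is a canonically written pair $ab$ with $a<b$ over the fixed agent set $A$ and that $\hist$ is a word over $C_{\mathrm{src}}$. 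Both checks are finite string inspections.

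The semantic conjunct $\ELGM,\hist\models\mathsf{pre}(\alpha)$ is reduced to the source logic. First, compute $\tau(\mathsf{pre}(\alpha))$ by the recursion in Definition~\ref{def:source-guard-fragment}. This is a terminating structural recursion that produces a finite formula, and by the remark following that definition it lies in $\Lwn$. By Lemma~\ref{lem:source-correspondence}, $\ELGM,\hist\models\mathsf{pre}(\alpha)$ holds iff $(M_{\mathrm{AW}},\hist)\models\tau(\mathsf{pre}(\alpha))$. The lottery weights have already been eliminated via Lemma~\ref{lem:lottery-transparency}, so the real-valued and possibly non-computable $\lot{a}$ never need to be encoded. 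This point is essential, because the theorem supplies encodings only for $\alpha$, $\hist$, and $\mathsf{pre}(\alpha)$. Next, invoke Apt and Wojtczak's decidability result~\citep{apt2016decidability,apt2017epistemic}: for a fixed finite agent set with the secret-only root, truth of an $\Lwn$ formula at a given finite call sequence in $M_{\mathrm{AW}}$ is decidable. The Guard outputs $\permit$ exactly when both decision procedures return true, so the composite procedure decides $\Guard_{\ELGM}(\alpha,\hist)$.

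The main obstacle is making sure that the cited source theorem applies in exactly this form. Its hypotheses must match Definition~\ref{def:aw-source-instance}: $|A|\ge 3$, full history domain, push-pull union update, and accessibility $\sim^{\mathrm{AW}}_a$. Its decision problem must also be truth at a \emph{given} call sequence, rather than validity or truth under a protocol. If the available statement is phrased differently, I would argue directly. For a factual $\theta$, the truth of $\tau(\theta)$ at any sequence depends only on the finite gossip state, which has at most $2^{|A|^2}$ values. To decide $K_a\tau(\theta)$ at $\hist$, I would use the source's characterization of $[\hist]_a$ through recursively constructed views. That characterization is the crux: it lets one decide whether some $a$-equivalent sequence reaches a gossip state falsifying $\theta$, which is a reachability question over a finite abstraction of view-compatible sequences. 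Apt and Wojtczak show this question is effectively computable, so I would cite that step rather than reprove it.
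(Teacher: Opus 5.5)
Your proposal is correct and follows essentially the same route as the paper. The structural conjunct holds trivially because $H_{\mathrm{src}}$ is the full set of finite call sequences, and by Lemma~\ref{lem:source-correspondence} the precondition test reduces to satisfaction of $\tau(\mathsf{pre}(\alpha))\in\Lwn$ in $M_{\mathrm{AW}}$, which Apt and Wojtczak show is decidable; you also add the useful observation that the lottery weights never need to be encoded, which explains why encodings of only $\alpha$, $\hist$, and $\mathsf{pre}(\alpha)$ suffice.
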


\begin{proof}[Proof sketch]
Because $H_{\mathrm{src}}$ contains every finite source call sequence, $\hist\cdot\alpha\in H_{\mathrm{src}}$, so the structural part of the Guard is decidable.  By Lemma~\ref{lem:source-correspondence}, the remaining precondition test is equivalent to satisfaction of $\tau(\mathsf{pre}(\alpha))\in\Lwn$ in $M_{\mathrm{AW}}$.  Apt and Wojtczak prove that this satisfaction problem is decidable for every finite source call sequence and every formula in $\Lwn$~\citep{apt2017epistemic}.  No complexity-class bound is claimed here.
\end{proof}

\begin{remark}[What the theorem does not say]
\label{rem:complexity-scope}
The theorem does not cover probability threshold guards, arbitrary factual atoms outside $\vocabKsrc$, nested individual knowledge, common knowledge, arbitrary admissible ELGMs with an ineffective or protocol-restricted history domain, phone-number exchange, or arbitrary LLM tool traces.  It also does not import complexity results for dynamic EPLS model checking~\citep{vaneijck2014epls}; that is a different problem.
\end{remark}

\section{State Updates and Candidate-Action Selection}
\label{sec:updates-llm}

There are two different dynamics in the model.  A call changes the factual gossip state and hence the histories and views.  A lottery reweighting changes an agent's graded uncertainty over histories without changing the underlying history domain, view relation, or valuation.

\subsection{Structural Call Update}

For a call $ab\in\Acts_H(\hist)$ permitted by $\Guard_{\ELGM}$, the structural update appends the call to the actual history and updates the factual gossip state by the usual push-pull rule.  This paper uses the direct call-history update $\hist\mapsto\hist\cdot ab$, which is defined because $ab\in\Acts_H(\hist)$.  We do not claim that a singleton dynamic epistemic logic (DEL) event model is sufficient to encode the information exchanged in a gossip call; a correct action model would need to encode the participants' observations.

\subsection{Probabilistic Reweighting}

The probabilistic update is a normalized product rule, inspired by dynamic update with probabilities~\citep{vanbenthem2009probability}.  It is a modeling definition for ELGMs, not a theorem imported wholesale from that literature.  The factors below are positive likelihood factors.  This notation does not assert that they are normalized probability kernels over a separately defined event or observation space.

\begin{definition}[Three-source lottery reweighting]
\label{def:three-source-update}
Let $e$ be an event and $o_a$ the observation received by agent $a$.  Suppose the prior lottery is admissible and the factors
\[
  P_{\mathrm{occ}}(e\mid \hist)>0,
  \qquad
  P_{\mathrm{obs}}(o_a\mid e,\hist)>0
\]
are bounded and chosen so that the normalizer below is finite and nonzero.  Define
\[
  \lot{a}'(\hist)=\frac{1}{Z_a}\,\lot{a}(\hist)\,
  P_{\mathrm{occ}}(e\mid \hist)\,
  P_{\mathrm{obs}}(o_a\mid e,\hist),
\]
where
\[
  Z_a=\sum_{\hist'\in H}\lot{a}(\hist')
  P_{\mathrm{occ}}(e\mid \hist')P_{\mathrm{obs}}(o_a\mid e,\hist').
\]
The stated finite, nonzero normalizer is an additional assumption: classwise admissibility of the prior alone does not imply it.  Under the positivity and boundedness conditions above, the resulting lottery has finite positive class sums and is therefore admissible.
\end{definition}

\begin{lemma}[Positive reweightings preserve guards]
\label{lem:positive-kernel}
Suppose every lottery that is reweighted has the form $\lot{a}'(\hist)=\lot{a}(\hist)k_a(\hist)$, where $0<k_a(\hist)<\infty$, and every resulting lottery is admissible.  If $H$, $V$, and all accessibility relations $\Racc{a}$ are unchanged, then every $\varphi\in\vocab_K$ has the same truth value before and after the reweighting.  In particular, every Guard in Definition~\ref{def:guard} has the same truth value.
\end{lemma}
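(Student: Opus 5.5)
The plan is to reduce the claim to Lemma~\ref{lem:lottery-transparency} instead of redoing an induction on formulas. Write $\ELGM=(H,V,\{\Racc{a}\},\{\lot{a}\})$ for the model before the reweighting. Write $\ELGM'=(H,V,\{\Racc{a}\},\{\lot{a}'\})$ for the model after it, where $\lot{a}'=\lot{a}k_a$ for the reweighted agents and $\lot{a}'=\lot{a}$ for the others.

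First we check that $\ELGM'$ is an admissible ELGM in the sense of Definition~\ref{def:admissible-elgm}. Its domain $H$, valuation $V$ and relations $\Racc{a}$ are unchanged by hypothesis, so they still come from Definition~\ref{def:views}. Strict positivity holds because $\lot{a}(\hist)>0$ and $0<k_a(\hist)<\infty$ give $\lot{a}'(\hist)>0$. Finiteness and positivity of every class sum $\sum_{\hist'\in[\hist]_a}\lot{a}'(\hist')$ is not automatic, since $k_a$ need not be bounded on an infinite class. The statement assumes it explicitly, so we invoke that assumption here; this is the only delicate point.

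Next, both $\ELGM$ and $\ELGM'$ are admissible and have the same underlying lottery-free Kripke structure $M_0=(H,V,\{\Racc{a}\})$. Applying Lemma~\ref{lem:lottery-transparency} twice gives, for every $\hist\in H$ and every $\varphi\in\vocab_K$,
\[
\ELGM,\hist\models\varphi \iff M_0,\hist\models\varphi \iff \ELGM',\hist\models\varphi .
\]
This proves that truth values in $\vocab_K$ are invariant.

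For the Guard clause, $\Guard_{\ELGM}(\alpha,\hist)$ depends on two things. The first is the structural condition $\alpha\in\Acts_H(\hist)$, which depends only on $H$ and so is unchanged. The second is the truth of $\mathsf{pre}(\alpha)\in\vocab_K$ at $\hist$, which is unchanged by the previous step. Hence $\Guard_{\ELGM}(\alpha,\hist)=\Guard_{\ELGM'}(\alpha,\hist)$ for every $\alpha$ and $\hist$.

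If one prefers not to rely on Lemma~\ref{lem:lottery-transparency}, the fallback is a direct induction on $\varphi$. In the $\K{a}{\psi}$ case, one shows that $\Prob{a}{\psi}(\hist)=1$ under either lottery iff every $\hist'\in[\hist]_a$ satisfies $\psi$, using positivity of the weights and finiteness of the denominators. The main obstacle is the same in both routes: making sure the admissibility hypothesis on the reweighted lotteries is exactly what supplies finite, nonzero denominators.
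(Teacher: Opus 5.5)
Your proposal is correct and follows the paper's own argument. Both reduce the claim to Lemma~\ref{lem:lottery-transparency}, using that the reweighted model is admissible and shares $H$, $V$, and all $\Racc{a}$ with the original; your separate check that $\alpha\in\Acts_H(\hist)$ depends only on $H$ spells out the Guard step that the paper leaves implicit.
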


\begin{proof}[Proof sketch]
The update changes numeric weights but preserves positive support and leaves $H,V,$ and all $\Racc{a}$ fixed.  Lemma~\ref{lem:lottery-transparency} says $\vocab_K$ truth depends only on those shared structures under admissibility.  Hence Guard truth is invariant.  
\end{proof}

\subsection{LLM Candidate-Action Policies}

For the formal core, an LLM policy is represented as a stochastic generator of gossip-call candidates
\[
  \LLMpol(\alpha\mid \hist), \qquad \alpha\in C.
\]
At a decision step, the LLM submits $\alpha$; the Guard permits or denies it.  Permitted calls update the call history.  A denied candidate does not become safe merely because the LLM assigned it high probability.  This is the intended separation between graded model confidence and symbolic knowledge.  Natural-language messages, retrieval calls, and other action types lie outside this formal transition system until their parsers and operational interfaces are specified.

Representation-editing methods such as ARE~\citep{zhang2024are} can change an LLM's distribution over candidate actions.  This paper does not identify ARE with the lottery reweighting in Definition~\ref{def:three-source-update}, and no theorem below depends on ARE.  Connecting a neural intervention to the positive-kernel lemma requires an empirical mapping from model behavior to epistemic alternatives, together with a check of the lemma's assumptions.

\section{Decidability and Conditional Progress}
\label{sec:decidability-termination}

Guard checking is decidable only in the source-compatible fragment.  Termination needs further protocol assumptions.  We state a general condition for later transfer instead of applying a gossip bound to all LLM-agent executions.

\begin{theorem}[Conditional ranking termination]
\label{thm:conditional-termination}
Let $(X_t)_{t\geq 0}$ be a discrete-time execution process on a state space $X$, adapted to a filtration $(\mathcal{F}_t)_{t\geq 0}$ that records the complete execution prefix through step $t$, with $X_0=x_0$.  Suppose there is a ranking function $\rho:X\to\{0,1,\ldots,B\}$ with goal set $G=\rho^{-1}(0)$, and define the hitting time $T:=\inf\{t\geq 0:X_t\in G\}$, with $T=\infty$ if this set is empty.  Let $\beta:X\setminus G\to C$ select a Guard-permitted call at each non-goal state, such that every outcome of executing $\beta(x)$ reaches $x'$ with $\rho(x')<\rho(x)$.  Conditional on $\mathcal{F}_t$, the policy must select $\beta(X_t)$ at the next candidate step with probability at least $\varepsilon>0$ whenever $X_t\notin G$.  Finally, every other permitted call and every rejected candidate must leave the rank nonincreasing.  Then $\mathbb{E}[T]\leq\rho(x_0)/\varepsilon\leq B/\varepsilon$.
\end{theorem}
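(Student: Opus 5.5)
The plan is a supermartingale (drift) argument on the rank process $R_t:=\rho(X_t)$, stopped at $T$. First I will turn the hypotheses into a one-step drift inequality. On the event $\{X_t\notin G\}=\{T>t\}$, which is $\mathcal{F}_t$-measurable, there are three possibilities at step $t+1$. The policy selects $\beta(X_t)$, which is Guard-permitted by assumption, and then $R_{t+1}\le R_t-1$ because ranks are integers and strictly decrease. The policy selects some other permitted call. Or it submits a rejected candidate. In the last two cases $R_{t+1}\le R_t$. Since the first case has conditional probability at least $\varepsilon$, I get
\[
\mathbb{E}[R_{t+1}\mid\mathcal{F}_t]\le R_t-\varepsilon\quad\text{on }\{T>t\}.
\]
Here I take "candidate step" to be the same as one time step of the process; if several candidate steps can occur per time index, I would reindex time by candidate steps first. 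This alignment of step indexing is the one modelling point I need to fix explicitly.

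Next, define $Y_t:=R_{t\wedge T}+\varepsilon\,(t\wedge T)$. On $\{T>t\}$ the drift inequality gives $\mathbb{E}[Y_{t+1}\mid\mathcal{F}_t]\le Y_t$. On $\{T\le t\}$ the process is frozen, so $Y_{t+1}=Y_t$. Hence $(Y_t)$ is a nonnegative supermartingale, and $\mathbb{E}[Y_t]\le Y_0=\rho(x_0)$ for every $t$. Because $R_{t\wedge T}\ge 0$, this yields $\varepsilon\,\mathbb{E}[t\wedge T]\le\rho(x_0)$.

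Finally, let $t\to\infty$. Monotone convergence gives $\mathbb{E}[T]\le\rho(x_0)/\varepsilon$, which in particular forces $T<\infty$ almost surely. The bound $\rho(x_0)\le B$ then finishes the proof. If $x_0\in G$, then $T=0$ and the bound is trivial.

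The main obstacle is the careful use of conditioning. The $\varepsilon$-lower bound is stated conditional on $\mathcal{F}_t$, so the drift inequality only needs the event $\{X_t\notin G\}$, which is $\mathcal{F}_t$-measurable. I must also make sure that the "every outcome of $\beta(x)$" clause gives the decrease almost surely, even though execution may be stochastic. I would state both of these explicitly. The rest is the standard optional-stopping style computation, which avoids needing integrability beyond boundedness of $\rho$.
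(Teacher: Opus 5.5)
Your proof is correct, but it takes a different route from the paper's sketch.

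The paper argues by phases. It splits the run into at most $\rho(x_0)$ stretches separated by strict rank decreases. It then dominates the length of each stretch by a geometric variable of mean $1/\varepsilon$ and adds the stretches by linearity of expectation.

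You instead compress everything into the single stopped process $Y_t=\rho(X_{t\wedge T})+\varepsilon\,(t\wedge T)$. Take $A_{t+1}$ to be the event that $\beta(X_t)$ is the next candidate. The pointwise bound $R_{t+1}\le R_t-\mathbf{1}_{A_{t+1}}$ then turns the $\mathcal{F}_t$-conditional hypothesis into the drift inequality through one conditional expectation.

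What your route buys:
\begin{itemize}
\item You never restart the process at the random times $\tau_k$ where phases begin. A rigorous version of the paper's argument must do exactly that. Because the policy may depend on the whole execution prefix, the geometric domination has to be proved conditionally on $\mathcal{F}_{\tau_k}$ at these stopping times, and the paper's sketch passes over this step.
\item You get $T<\infty$ almost surely as a by-product.
\item The argument carries over, with $\varepsilon$ replaced by the relevant constant, to any nonnegative rank whose conditional expected decrease off $G$ is bounded below by a positive constant.
\end{itemize}

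What the paper's route buys, once made rigorous, is a distributional conclusion rather than only a bound on the mean. It shows that $T$ is stochastically dominated by a sum of $\rho(x_0)$ geometric variables, which also yields exponential tail bounds.

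Your remark about aligning time indices with candidate steps is a modelling assumption the paper also leaves implicit, and it is worth stating.
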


\begin{proof}[Proof sketch]
At any non-goal state, the waiting time to the next strict rank decrease is stochastically dominated by a geometric random variable of mean $1/\varepsilon$.  The remaining assumptions prevent a different candidate from increasing the rank or undoing progress.  At most $\rho(x_0)$ strict rank decreases are needed before rank zero is reached.  Linearity of expectation gives the bound.  
\end{proof}

\begin{remark}[Instantiating the rank]
Classical gossip results can instantiate the theorem only after the implemented transition system exhibits the required rank, nonincreasing alternative transitions, and lower policy bound.  Complete-graph push-pull Learn New Secrets (LNS) protocols have source-specific call bounds~\citep{ditmarsch2017gossip}; dynamic partial networks have different success conditions~\citep{ditmarsch2019dynamic}.  This paper therefore does not state a universal closed-form bound for arbitrary guarded LLM-agent executions.
\end{remark}

\section{Related Work and Future Work}
\label{sec:related-limitations}

\paragraph{Epistemic gossip and DEL.}
Dynamic epistemic logic provides the standard semantics for information-changing events~\citep{vanditmarsch2007del}.  Epistemic gossip protocols specialize this tradition to pairwise calls and higher-order knowledge of secrets~\citep{ditmarsch2017gossip,herzig2017gossip,cooper2019epistemic}.  EPLA model imports the call-history perspective, but adds lottery weights so that agents can have graded uncertainty over histories.

\paragraph{Epistemic probability and probabilistic update.}
EPLS motivates the use of lotteries and the identification of knowledge with probability one~\citep{vaneijck2014epls}.  Earlier distributed-systems work shows that the choice of agent probability spaces matters in runs-and-systems models~\citep{halpern1993knowledge}; ELGMs instead fix histories as worlds and source-style views as the epistemic relation for a narrow gossip setting.  Kooi~\citep{kooi2003probabilistic} provides broad probabilistic dynamic epistemic logic background.  Van Benthem, Gerbrandy, and Kooi~\citep{vanbenthem2009probability} specifically distinguish prior, occurrence, and observation probabilities in dynamic update.  EPLA model uses these ideas in a restricted gossip setting and keeps the full-support assumptions explicit.

\paragraph{LLM agents and formal methods.}
Recent work contrasts formal multi-agent systems (MAS) theory with LLM-agent practice~\citep{lamalfa2025}.  Zhang et al.~\citep{zhang2024verify} provide a formal-methods roadmap for trustworthy AI agents.  Yu et al.~\citep{yu2025formal} instead study model checking for multi-agent systems modeled in an epistemic process calculus.  Contract-like or neurosymbolic layers for agents make a similar engineering move: a learned model generates candidate behavior, while a symbolic layer constrains execution~\citep{leoveanu2025dbc}.  EPLA model instantiates this pattern with an epistemic state layer, a guarded action interface, and learning interfaces.

\paragraph{Representation editing.}
ARE is an empirical adversarial representation engineering method for editing LLM behavior~\citep{zhang2024are}.  It uses hidden state representations, a discriminator, and fine-tuning objectives.  In EPLA model, it is an architectural component whose effects must be tested empirically after implementation.


Implementation and empirical assessment remain the main line of our future work, which is currently being conducted.

\section{Conclusion}
\label{sec:conclusion}

EPLA model combines a Policy LLM that generates candidate actions, a Symbolic Guard that controls execution, and an epistemic state layer that makes the Guard's restricted conditions precise.  The central technical result is lottery transparency for the knowledge fragment.  Under admissibility and full support, positive lottery weights do not affect which knowledge-fragment Guards are true.  For the explicitly defined modal-depth-one source fragment, the correspondence with Apt--Wojtczak's model makes Guard checking decidable.  Positive admissible reweighting therefore preserves these Guards, and the conditional ranking result bounds expected progress when its explicit policy and rank assumptions hold.  The broader architecture integrates conditional belief, retrieval, diagnostic Guard feedback, representation editing, and LTL-constrained policy learning. To support the effectiveness of our approach, we are currently conducting limited experimental validation.

\bibliographystyle{plainnat}
\bibliography{references}

\end{document}